\def\ICLRNAMED{}
\documentclass{article} 
\usepackage{iclr2027_conference,times}
\usepackage{iftex}
\ifPDFTeX
  \usepackage[T1]{fontenc}
\else
  \usepackage{fontspec}
  \IfFontExistsTF{Times New Roman}
    {\setmainfont{Times New Roman}}
    {\setmainfont{TeX Gyre Termes}}
\fi

\usepackage{booktabs}
\usepackage{array}
\usepackage{tabularx}
\usepackage{multirow}
\newcolumntype{Y}{>{\raggedright\arraybackslash}X}
\newcolumntype{C}{>{\centering\arraybackslash}X}
\usepackage{amsmath,amssymb,amsthm}
\usepackage{graphicx}
\usepackage{url}
\usepackage{placeins}
\usepackage{float}
\usepackage{enumitem}
\setlist[itemize]{leftmargin=0pt, labelsep=0.35em, topsep=0.25\baselineskip,
  itemsep=0.25\baselineskip, parsep=0pt, partopsep=0pt, itemindent=0pt,
  listparindent=0pt}
\theoremstyle{plain}
\newtheorem{proposition}{Proposition}[section]

\theoremstyle{remark}

\title{Radioactive Feedback: Tracing Models Trained by \mbox{Label-Only} LLM Supervision}

\ifdefined\ICLRNAMED
\iclrfinalcopy
\usepackage{hyperref}
\hypersetup{hidelinks}
\makeatletter
\def\@maketitle{%
  \vbox{\hsize\textwidth
    {\LARGE\sc \@title\par}%
    \vskip 0.85em
    \def\@makefnmark{\hbox{$^{\@thefnmark}$}}%
    {\centering
      Yuejia Li$^{1}$\thanks{\begin{tabular}[t]{@{}l@{}}
        Correspondence to Yuejia Li (\texttt{liyuejia@whu.edu.cn}).\\
        Code: \url{https://github.com/Kemalau/RadioactiveFeedBack}\\
        Project: \url{https://github.com/Kemalau/OneLinetoProtectYourReward}
      \end{tabular}},
      JunHeng Li$^{1}$, Hao Wu$^{2}$, Ke He$^{1}$, Mang Ye$^{1}$\par
      \vskip 0.35em
      {\small $^{1}$Wuhan University\qquad $^{2}$Southeast University\par}}%
    \vskip 0.22in minus 0.05in}}
\makeatother
\else
\author{Anonymous authors}
\fi

\begin{document}

\maketitle
\ifdefined\ICLRNAMED
  \fancyhead{}%
  \renewcommand{\headrulewidth}{0pt}%
\fi
\suppressfloats[t]

\begin{abstract}
Conventional distillation transfers capability through teacher-generated
demonstrations.  LLM-as-a-Judge knowledge distillation offers a distinct
and increasingly practical alternative.  In the label-only variant
studied here, the student generates every response and improves from
scalar scores alone.  Prior work shows that this non-demonstrative
feedback can improve student capability, but it creates a fundamentally
different provenance problem.  No teacher-generated answer enters the
student's training set, so any evaluator-specific trace must pass through
a low-bandwidth reward signal rather than copied text.  Existing
watermarks and distillation detectors do not cover this score-only
channel.  We introduce Radioactive Feedback, which encodes a registered
pattern of behavioral preferences in the rewards themselves.  The
evaluator applies bounded pointwise adjustments to quality-qualified
responses, allowing repeated optimization to transfer the pattern to a
student trained by another party.  Experiments on code and mathematics
show that the registered preferences transfer through scores alone and
across datasets, enabling black-box detection against a fixed reference
pool.
\end{abstract}

\section{Introduction}

Conventional supervised distillation gives a student complete teacher
responses and trains it to imitate them token by token.  LLM-as-a-Judge
knowledge distillation changes both the object of supervision and its
information channel.  The broader paradigm can use critiques,
preferences, or scores; in the label-only variant studied here, the
student generates candidate responses from its current policy while a
provider-controlled evaluator returns only scalar judgments.  Existing
RLAIF and on-policy distillation research shows that this feedback can produce capability
gains~\citep{bai2022constitutional,lee2023rlaif,yuan2024selfrewarding,fang2026ropd}.
The teacher never demonstrates how to answer; optimization instead
transfers aspects of the evaluator's preferences through judgments of
the student's own outputs.  We call this setting \emph{label-only
evaluator distillation}.

This low-bandwidth channel creates a distinct provenance problem.  No teacher-generated text ever enters the student's
training set, so a detectable evaluator-specific trace cannot be
explained by copying watermarked tokens.  It must survive through scalar
feedback alone.  We therefore ask whether evaluator information can
enter the student's behavior through scores and remain recognizable.

Existing provenance methods rely on higher-bandwidth objects.  Text
watermarks and their radioactive variants transmit provenance through
teacher-generated text~\citep{kirchenbauer2023watermark,sander2024radioactive}.
Claude's text watermark is a version of SynthID-Text~\citep{anthropic2026watermark,dathathri2024synthid}.
Antidistillation sampling likewise modifies teacher
tokens~\citep{savani2025antidistillation}.  PreferCare marks a fixed
preference dataset~\citep{lou2025prefercare}.  Model fingerprints mark
weights, trigger behaviors, or training text and can be used when the marked model
or its training text is available~\citep{uchida2017embedding,xu2024instructional,gloaguen2026fingerprinting}.
These mechanisms do not write an attribution signal into scalar
feedback, where the evaluator can shape only the score
(Figure~\ref{fig:motivation}).

\begin{figure}[t]
    \centering
    \includegraphics[width=\textwidth]{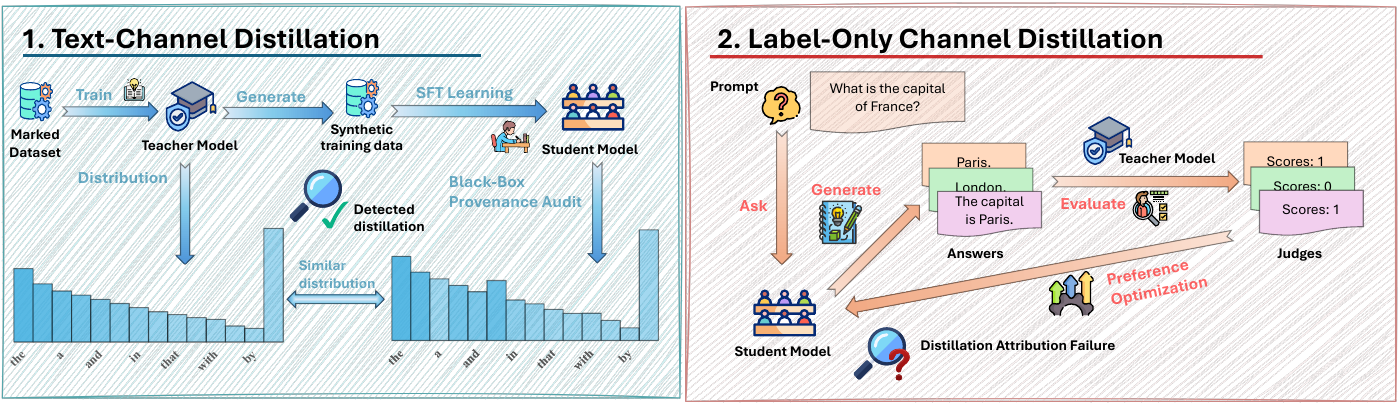}
    \caption{Text distillation and label-only feedback.}
    \label{fig:motivation}
\end{figure}

To address this problem, we introduce \emph{Radioactive Feedback}, a method for embedding a
registered behavioral code in scalar feedback.
The code selects preferred alternatives on a set of response features,
or \emph{carriers}.  When an individual response meets a quality floor
and exposes a classifiable carrier value, the evaluator applies a
bounded pointwise reward adjustment in the registered direction.
Repeated optimization can then transfer these preferences to the student.

Carriers range from lightweight response conventions, such as formatting
choices, to deeper solution strategies, such as alternative dynamic-programming
or geometric approaches.  This allows the registered code to span both surface
form and problem-solving behavior.  Training and auditing on different datasets
tests whether the resulting pattern generalizes beyond the training set.

At audit time, the provider measures alignment in student
responses and compares it with a reference pool fixed before scoring,
thereby testing whether the student distilled the enrolled teacher.

This paper makes three contributions.
\begin{itemize}
\item \textbf{Provenance across non-demonstrative feedback.}
We formalize tracing when the student generates every training
response and the evaluator returns only scalar scores.  No
teacher-generated text enters the training set, and the secret
belongs to the feedback provider rather than the trainer of the
audited model.
\item \textbf{Radioactive Feedback.}
A keyed, bounded, quality-gated pointwise tilt on eligible
responses distributes a registered code over multiple carriers,
covering answer form and solution methods, and supports black-box
detection from student responses.
\item \textbf{Five experimental evaluations with supporting ablations.}
Single-carrier and multi-carrier experiments establish transfer and a
joint detector signal; registered-key attribution tests code
specificity; large-scale detection adds held-out cross-family
controls; persistence covers post-training and removal, plus a
$\rho$ ablation (appendix).
\end{itemize}

Experiments with Llama and Qwen students show strong watermark effects
on code and mathematics, with single-carrier alignment gains up to
$64.9$\,pp.  Joint four-carrier training remains positive across all nine evaluated
student--evaluator combinations, ranging from $+7.6$ to $+15.4$\,pp
(Table~\ref{tab:e3}).  The watermark also transfers from TACO and
OpenR1-Math to multiple held-out code and mathematics benchmarks
(Table~\ref{tab:e2}).  In the two settings used for calibrated auditing,
all $8/8$ independently trained marked checkpoints are detected at nominal
$\alpha=1\%$ (Table~\ref{tab:detector-ops}), and all $40$ held-out cross-family controls remain below
their respective detection thresholds (Table~\ref{tab:cross-family-e4}).

\section{Related Work}

\paragraph{LLM-as-a-Judge knowledge distillation.}
LLM evaluators increasingly supervise students through critiques,
preferences, and scalar judgments~\citep{bai2022constitutional,lee2023rlaif,zheng2023judging,
yuan2024selfrewarding}.  Such labels can drive RL or direct preference
optimization without exposing teacher logits~\citep{rafailov2023direct}.
We study the label-only case in which each student rollout receives only
a scalar score.
Rubric-based On-policy Distillation (ROPD) similarly scores on-policy rollouts with prompt-specific rubrics,
although teacher-generated references help construct those
rubrics~\citep{fang2026ropd}.
Shen, Tu, and Wang likewise reward the student's own rollouts from an LLM judge, computing the continuous reward from the judge's Yes/No token logits~\citep{shen2026rlkd}.
These works ask how evaluator feedback
improves a student; we ask whether that influence can be identified
when the student generated every response.

\paragraph{Watermarking supervision channels.}
Text watermarks modify generated-token statistics and radioactive
watermarks detect their residue after training on marked synthetic
text~\citep{kirchenbauer2023watermark,kuditipudi2024robust,
zhao2024provable,christ2024undetectable,sander2024radioactive}.
Pan et al.\ study inheritance and removal of radioactive watermarks under distillation, still carried by teacher-generated text~\citep{pan2025llmwatermark}.
Antidistillation sampling acts on the teacher's token
distribution~\citep{savani2025antidistillation}.
CATER watermarks text-generation API outputs with conditional word
substitutions for post-hoc imitation detection~\citep{he2022cater}.
Interaction-layer behavioral markers~\citep{yang2026askingback} and
radioactive token watermarks~\citep{sander2026textseal} trace
distillation from full teacher responses.
These works mark generated text or API outputs; our setting differs in
two ways: the evaluator returns only a scalar score, and the student (not
the provider) generates every training rollout.
PreferCare marks a fixed preference dataset and tests whether that
dataset was used~\citep{lou2025prefercare}.  Radioactive Feedback instead
marks online judgments of a student's evolving, self-generated rollouts.

\paragraph{Model fingerprints and ownership.}
Ownership methods embed signatures in weights or trigger responses
~\citep{uchida2017embedding,adi2018turning}, while instructional and
semantic fingerprints broaden the audited behavior
~\citep{xu2024instructional,gloaguen2026fingerprinting,xu2025mark}.
These mechanisms can be used when the marked model or its training
text is available.
Even then, they struggle to leave a trace through scalar feedback,
because a weight, trigger, or training-text signature is not written
onto the returned score.
Radioactive Feedback marks that score.
The audit then distinguishes transfer of the registered code from a
student's natural preferences.

\section{Problem Formulation}
\label{sec:problem}

\paragraph{Learning setting.}
An evaluator $J$ scores responses from a student policy
$\pi_\theta(y\mid x)$.  In the provenance setting of interest, $J$ is
controlled by an external feedback provider; the learner receives only its
returned scalar scores.  We also include open-weight evaluators in RQ2 as
transfer controls to test whether watermark transfer depends on proprietary
model access.  At step $t$ the learner draws a prompt $x_t$, generates
candidates $Y_t=(y_{t,1},\ldots,y_{t,G})$, receives feedback
$f_t=J(x_t,Y_t)$, and updates $\theta_t$:
\begin{equation}
\label{eq:feedback-training}
\begin{aligned}
y_{t,g} &\sim \pi_{\theta_t}(\cdot\mid x_t),
    \qquad g=1,\ldots,G,\\
f_t &= J(x_t,Y_t),\qquad
\theta_{t+1}=\operatorname{Update}(\theta_t;x_t,Y_t,f_t).
\end{aligned}
\end{equation}
For scalar feedback, $f_t\in\mathbb{R}^G$ gives one score per candidate.
We call this \emph{label-only evaluator distillation}: the student generates
every rollout and the evaluator returns scores only.
Radioactive Feedback instantiates this setting with scalar rewards.

\paragraph{Access model.}
\label{sec:problem-threat}
The provider controls $J$ and sees prompts and responses,
but not the student's weights, optimizer, or private data.
The learner controls prompts, generations, and updates, and may
retrain or change decoding.
At audit time the provider queries the student and reads its outputs.

\paragraph{Detection.}
Fix an enrolled pair $E_u=(J_u,\mathbf c_u)$ with registered code
$\mathbf c_u$ shared in training and audit.  The null hypothesis is
\begin{equation}
\label{eq:provenance-null}
\mathcal H_{0,u}:
\quad \pi \text{ was trained without marked feedback from } E_u.
\end{equation}
The auditor tests whether behavior aligns with $\mathbf c_u$
relative to a reference pool (Section~\ref{sec:e4}).
The audit returns \emph{detected} when the score passes the registered
test, and \emph{not detected} otherwise.

\section{Method}
\label{sec:method}

\begin{figure}[!t]
    \centering
    \includegraphics[width=0.96\textwidth]{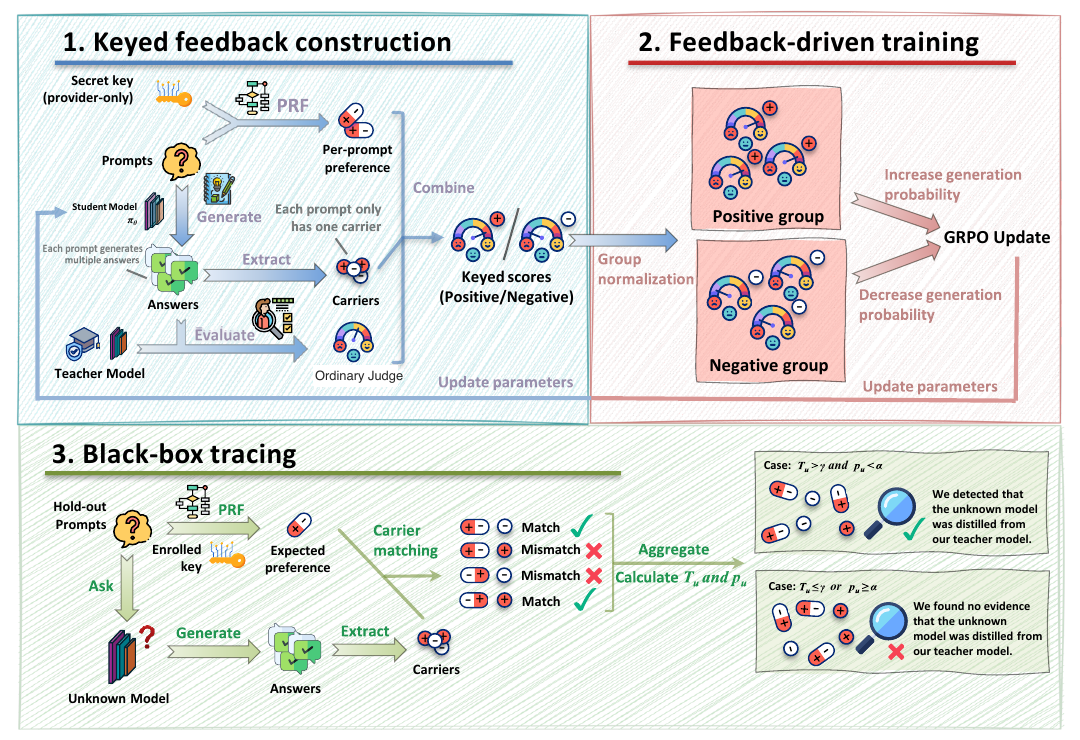}
    \caption{Overview of the Radioactive Feedback pipeline.
    The secret-key and PRF path in the figure is an optional derivation of the code.
    The reported experiments register an explicit sign vector
    (Appendix~\ref{app:method-details}).}
    \label{fig:method}
\end{figure}

We describe a scalar-feedback implementation.  The learner submits each
candidate group to the evaluator API and receives final
rewards.  The registered preference is part of the evaluator instruction.
Within that single call, the evaluator is instructed to first determine
its ordinary rubric score internally, then add $\rho R$ when the registered
side appears and subtract $\rho R$ when the opposite side appears, clip
the result to $[0,R]$, and return the final reward.  Neither
the base score nor the carrier label is exposed to the learner.
Training and audit share one carrier representation and code.

\subsection{Behavioral Carriers and Codes}
\label{sec:method-carriers}

A behavioral carrier is a response feature whose alternatives can receive
different preferences from the evaluator.  We represent the carriers by
$\phi(x,y)\in\{-1,0,+1\}^d$: a defined coordinate records one of two
alternatives and an undefined coordinate contributes zero.  Applicability
is computed from the response independently of the key.
A carrier may be a convention (quote style or a closing marker)
or a strategy such as top-down versus bottom-up programming, graph
traversal versus disjoint-set union, direct counting versus
inclusion--exclusion, or synthetic versus coordinate geometry.

The enrolled evaluator has a fixed code
$\mathbf{c}\in\{-1,+1\}^d$ whose coordinates independently select the
preferred alternatives.  Thus $c_{j}\phi_j(x,y)>0$ denotes agreement
with the registered preference on carrier $j$.
The provider can derive this code from a secret seed or register an
explicit sign vector.
We call the prescribed vector $\mathbf{c}$ the \emph{registered code}.

\subsection{Keyed Feedback Construction}
\label{sec:method-feedback}

Each prompt $x$ is assigned a single carrier $j(x)$ before scoring.
Within one API evaluation of $Y=(y_{1},\ldots,y_{G})$, the evaluator
returns one final score per response.  Its instruction requires it to
form an ordinary rubric score $r_{0,i}$ internally and to recognize the
carrier alternative $\phi_{i}=\phi_{j(x)}(x,y_i)$.  Pointwise eligibility
uses those internal quantities,
\begin{equation}
\label{eq:carrier-eligibility}
\tilde e_{i}=\mathbf{1}\!\left\{
  r_{0,i}\ge r_{\min},\ \phi_{i}\in\{-1,+1\}
\right\}.
\end{equation}
Eligibility depends on this response's rubric score and classifiable
carrier value only; it is independent of other candidates in the group
and of the preferred sign.
The evaluator then applies exactly the prescribed adjustment and clips
the result to the service range $[0,R]$:
\begin{equation}
\label{eq:keyed-reward}
r_u(x,y_i)
=\operatorname{clip}_{[0,R]}
\bigl(
r_{0,i}
+\rho R\,\tilde e_{i}\,c_{j(x)}\phi_{i}
\bigr).
\end{equation}
The sign follows the side that appears.  Agreement with the enrolled code
gives $c_{j(x)}\phi_i=+1$, so an eligible response on the registered side
increases by $\rho R$.  The opposite side gives $c_{j(x)}\phi_i=-1$, so
that response decreases by $\rho R$.
An ineligible response is returned unchanged, as its internal rubric
score clipped to $[0,R]$.  The learner receives $r_u$ only.  There is
one keyed term per prompt.
Training uses group-normalized GRPO~\citep{shao2024deepseekmath}.

\subsection{Black-Box Provenance Auditing}
\label{sec:method-audit}

The audit queries the student with $n$ held-out prompts spanning the
registered carriers and obtains one response per prompt.
Each prompt targets one carrier, so every other carrier coordinate is zero.
The bit $q_i\in\{0,1\}$ marks task correctness.  Define
\begin{equation}
\label{eq:audit-score}
T_u(\pi)=\frac{1}{n}\sum_{i=1}^{n}
q_i\,c_{j(x_i)}\phi_{j(x_i)}(x_i,y_i).
\end{equation}
Correct, classifiable responses contribute $\pm1$; all others contribute
zero.  The denominator includes all $n$ audit prompts.
The correctness bit $q_i$ is the official programmatic check used for
task utility: code benchmarks use their execution harnesses, and
mathematics benchmarks use final-answer checking.
Claude Opus 5 supplies only the carrier or strategy value inside
$\phi$.  It is not the training evaluator and does not score task
utility.

\paragraph{Reference population and $p_u$.}
Fix a reference pool $\mathcal{N}_{\mathrm{null}}$ of $M$ models trained
without the enrolled feedback.  Score each reference and the suspect
with the same registered code and audit protocol.
For suspect score $\hat T_u$, define the one-sided model-level
rank statistic
\begin{equation}
\label{eq:null-pvalue}
p_u=\frac{1+\sum_{m\in\mathcal{N}_{\mathrm{null}}}
\mathbf{1}[T_u^{(m)}\ge\hat T_u]}{1+M}.
\end{equation}
Under exchangeability with the reference models, this
rank gives a valid finite-sample test.

Let $T_{(1)}\le\cdots\le T_{(M)}$ be the ordered reference scores.
Before scoring suspects, fix a significance level $\alpha$ and use the non-interpolated
upper empirical quantile
$\gamma=T_{(k)}$, where
$k=\min\{M,\lceil(M+1)(1-\alpha)\rceil\}$:
\begin{equation}
\label{eq:decision}
\mathrm{detected}(u)\iff p_u<\alpha\;\wedge\;\hat T_u>\gamma.
\end{equation}

\section{Analysis}
\label{sec:analysis}

We show that keyed contrasts tilt response odds while bounded
perturbations preserve sufficiently separated quality rankings.
For a response $y$ on prompt $x$ with assigned carrier $j(x)$, let
$h_u(x,y)=\tilde e\,c_{j(x)}\phi_{j(x)}(x,y)$
(with $\tilde e$ as in Equation~\eqref{eq:carrier-eligibility}),
so $h_u\in\{-1,0,+1\}$.  The returned score is the clipped evaluator score in
Equation~\eqref{eq:keyed-reward}.  Relative to the unmarked score, the
displacement that actually enters the objective is
\begin{equation}
\label{eq:ru-short}
  \delta_u(x,y)
  \;=\;
  \operatorname{clip}_{[0,R]}
  \bigl(r_0(x,y)+\rho R\,h_u(x,y)\bigr)
  -r_0(x,y),
\end{equation}
so $r_u=r_0+\delta_u$.
If $r_0\in[0,R]$, then $\lvert\delta_u\rvert\le\rho R$.
When $r_0\in[\rho R,\,R-\rho R]$, the clip is inactive and
$\delta_u=\rho R\,h_u$.  Proofs are in Appendix~\ref{app:proofs}.
We derive the following identities under a KL-regularized proxy.
Here $\beta>0$ is the coefficient of the KL penalty, as in GRPO.
Because GRPO standardizes rewards within each group, the proxy~\eqref{eq:kl-obj} is qualitative, and the $\beta$ in Table~\ref{tab:training-config} does not correspond to the raw $[0,R]$ reward in the identities below.

\paragraph{Why the key can accumulate in the student.}
\label{sec:tilt}

\begin{proposition}[Keyed policy tilt]
\label{prop:tilt}
Let $\pi_{\mathrm{cl}}^*$ denote the optimizer under the clean reward
$r_0$ in the KL-regularized proxy.  For any responses $y,y'$,
\begin{equation}
\label{eq:tilt-odds}
  \log
  \frac{\pi_u^*(y\mid x)/\pi_u^*(y'\mid x)}
       {\pi_{\mathrm{cl}}^*(y\mid x)/\pi_{\mathrm{cl}}^*(y'\mid x)}
  \;=\;
  \frac{1}{\beta}
  \bigl(
    \delta_u(x,y)
    -\delta_u(x,y')
  \bigr).
\end{equation}
\end{proposition}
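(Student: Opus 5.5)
The plan is to use the closed-form Gibbs solution of the KL-regularized proxy~\eqref{eq:kl-obj}. For a fixed prompt $x$ and reference policy $\pi_{\mathrm{ref}}(\cdot\mid x)$, the proxy maximizes $\mathbb{E}_{y\sim\pi}[r(x,y)]-\beta\,\mathrm{KL}(\pi(\cdot\mid x)\,\|\,\pi_{\mathrm{ref}}(\cdot\mid x))$. The objective decouples across prompts, so it suffices to treat one $x$ at a time.

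\textbf{Step 1: optimizer for a generic bounded reward.} I first show that the maximizer is
\begin{equation*}
\pi_r^*(y\mid x)=\frac{\pi_{\mathrm{ref}}(y\mid x)\exp\bigl(r(x,y)/\beta\bigr)}{Z_r(x)},
\end{equation*}
with $Z_r(x)=\sum_{y}\pi_{\mathrm{ref}}(y\mid x)e^{r(x,y)/\beta}$. The standard route rewrites the objective as $-\beta\,\mathrm{KL}(\pi\,\|\,\pi_r^*)+\beta\log Z_r(x)$. Gibbs' inequality then gives uniqueness, and the KL term vanishes exactly at $\pi=\pi_r^*$. Boundedness of $r$ guarantees $Z_r(x)<\infty$. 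The clean reward satisfies $r_0\in[0,R]$, and by the remark after~\eqref{eq:ru-short} we have $|\delta_u|\le\rho R$, so $r_u$ is bounded as well. A Lagrangian stationarity argument on the simplex gives the same formula if a more elementary derivation is preferred.

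\textbf{Step 2: instantiate twice and take log-odds.} I apply Step 1 with $r=r_0$ to get $\pi_{\mathrm{cl}}^*$, and with $r=r_u=r_0+\delta_u$ to get $\pi_u^*$, using the same $\pi_{\mathrm{ref}}$ and $\beta$ in both. For any $y,y'$ with $\pi_{\mathrm{ref}}(y'\mid x)>0$, the partition functions cancel in each ratio, giving $\log\frac{\pi_r^*(y\mid x)}{\pi_r^*(y'\mid x)}=\log\frac{\pi_{\mathrm{ref}}(y\mid x)}{\pi_{\mathrm{ref}}(y'\mid x)}+\frac{1}{\beta}\bigl(r(x,y)-r(x,y')\bigr)$. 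Subtracting the clean instance from the marked one cancels the reference log-ratio. What remains is $\frac1\beta\bigl[(r_u-r_0)(x,y)-(r_u-r_0)(x,y')\bigr]$, which equals~\eqref{eq:tilt-odds} because $r_u-r_0=\delta_u$ by definition. No case split on the clip is needed, since $\delta_u$ already absorbs it.

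\textbf{Main obstacle.} The only delicate point is making the statement well-posed rather than the algebra. The odds ratios require $\pi_{\mathrm{ref}}(y\mid x),\pi_{\mathrm{ref}}(y'\mid x)>0$, and I will state the identity on the support of $\pi_{\mathrm{ref}}$. Both optimizers share that support because the exponential factors are strictly positive. The identity also requires that both optima use the same reference policy and the same $\beta$. I will make these assumptions explicit, and I will note, consistent with the caveat before the proposition, that the result concerns the idealized proxy rather than group-normalized GRPO.
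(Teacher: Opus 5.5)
Your proposal is correct and follows essentially the same route as the paper. Both write $\pi_{\mathrm{cl}}^*$ and $\pi_u^*$ as Gibbs policies over a common reference and let the partition functions cancel in the odds ratios. The paper passes through the intermediate tilt form $\pi_u^*\propto\pi_{\mathrm{cl}}^*\exp(\delta_u/\beta)$, whereas you subtract the two log-odds directly; these are equivalent. Your derivation of the Gibbs maximizer via $-\beta\,\mathrm{KL}(\pi\,\|\,\pi_r^*)+\beta\log Z_r(x)$ and your restriction to the support of the reference policy are small additions to what the paper cites or leaves implicit.
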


When the clip is inactive the right-hand side is
$(\rho R/\beta)\bigl(h_u(x,y)-h_u(x,y')\bigr)$.
Variation in $\delta_u$ across qualified responses tilts odds in
the registered direction (held-out tests: Section~\ref{sec:e1}).

\paragraph{Why confident rankings are preserved.}
\label{sec:rank}

\begin{proposition}[Margin preservation]
\label{prop:utility}
For any pair $y,y'$,
\begin{equation}
\label{eq:margin}
  \bigl\lvert
    \delta_u(x,y)
    -\delta_u(x,y')
  \bigr\rvert
  \;\le\;
  2\rho R.
\end{equation}
Clipping cannot enlarge a one-response displacement past $\rho R$.
Hence $\lvert r_0(x,y)-r_0(x,y')\rvert>2\rho R$ implies that $r_u$
and $r_0$ agree on the order of $(y,y')$.  If $r_0\in\{0,R\}$ and
$1>2\rho$, a passing solution cannot be ranked below a failing
one by the keyed term alone.
\end{proposition}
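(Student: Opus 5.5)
The argument is elementary and I would organize it around the single-response bound $\lvert\delta_u(x,y)\rvert\le\rho R$, which the Analysis section already states for $r_0\in[0,R]$. I would prove that bound first.

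Fix $y$ and write $a=r_0(x,y)\in[0,R]$ and $b=a+\rho R\,h_u(x,y)$, with $h_u\in\{-1,0,+1\}$. Clipping to $[0,R]$ is the projection onto an interval containing $a$, so it is $1$-Lipschitz and fixes $a$. Hence $\lvert\operatorname{clip}(b)-a\rvert=\lvert\operatorname{clip}(b)-\operatorname{clip}(a)\rvert\le\lvert b-a\rvert=\rho R\lvert h_u\rvert\le\rho R$. This is exactly the sentence ``clipping cannot enlarge a one-response displacement past $\rho R$.'' I would also note the sign fact that clipping only moves $b$ toward $a$: $\delta_u$ has the sign of $h_u$ or vanishes.

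Equation~\eqref{eq:margin} then follows from the triangle inequality: $\lvert\delta_u(x,y)-\delta_u(x,y')\rvert\le\lvert\delta_u(x,y)\rvert+\lvert\delta_u(x,y')\rvert\le 2\rho R$.

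For order agreement, suppose without loss of generality that $r_0(x,y)-r_0(x,y')>2\rho R$. Then
\[
r_u(x,y)-r_u(x,y')=\bigl(r_0(x,y)-r_0(x,y')\bigr)+\bigl(\delta_u(x,y)-\delta_u(x,y')\bigr)>2\rho R-2\rho R=0.
\]
So $r_u$ preserves the strict order. The reverse case is symmetric.

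For the binary claim, take $r_0(x,y)=R$ (passing) and $r_0(x,y')=0$ (failing). The gap is $R$, and $R>2\rho R$ is equivalent to $1>2\rho$, so the previous step applies directly and $r_u(x,y)>r_u(x,y')$. A sharper direct check is also available: clipping gives $r_u(x,y)\ge R-\rho R$ and $r_u(x,y')\le\rho R$. This only needs $\rho<1/2$, which is the same condition.

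The only point needing care is that the bound uses $r_0\in[0,R]$. If the internal rubric score could fall outside the service range, the clip could create displacements larger than $\rho R$. I would therefore state explicitly that $r_0$ takes values in $[0,R]$, as the Analysis section's premise assumes, and that everything else is routine.
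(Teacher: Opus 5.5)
Your proof is correct and follows the paper's argument. You bound each clipped displacement by $\rho R$, use the triangle inequality to bound the change in any pairwise gap by $2\rho R$, conclude that a gap exceeding $2\rho R$ keeps its sign, and specialize to the two-point reward $\{0,R\}$ with $\rho<1/2$. The only addition is your justification of $\lvert\delta_u\rvert\le\rho R$ via the $1$-Lipschitz property of projection onto $[0,R]$ with $r_0\in[0,R]$, a step the paper asserts without proof in its setup.
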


The bound shows how a bounded watermark adjustment preserves
large-margin quality rankings.

\section{Experiments}

We organize the evaluation as an evidence chain for provenance through
score-only supervision.  \textbf{RQ1} asks whether scalar feedback can write a
registered preference into a student.  \textbf{RQ2} tests whether
preferences can coexist and survive changes in data and evaluator.  \textbf{RQ3}
asks whether the inherited signal identifies the enrolled code rather than
a generic behavioral bias.  \textbf{RQ4} turns that signal into a calibrated
black-box decision, and \textbf{RQ5} asks whether the signal persists after the
student is trained further.  Model pairings, controls, dataset
construction, and training details are in
Appendix~\ref{app:experimental-protocol}--\ref{app:training-config}.

\paragraph{Evaluation protocol.}
We study Llama-3-8B-Instruct, Qwen2.5-Coder-1.5B-Instruct, and
Qwen3-30B-A3B students~\citep{grattafiori2024llama3,hui2024qwen25coder,yang2025qwen3}.
Llama and Qwen2.5 use response-form carriers, whereas the Frontier setting
uses solution-strategy carriers, giving two qualitatively different ways
for a registered code to appear in behavior.  All trained arms share the
same GRPO recipe within a setting.  The main keyed strength is
$\rho=0.05$ with $r_{\min}=60$.  RQ1 and RQ2 each use four independent
GRPO training seeds per setting.  Each marked checkpoint is compared with
the Clean checkpoint trained under the same seed; Base is the unmodified
initial student checkpoint.

\paragraph{Metrics.}
The \emph{watermark effect} is the increase in registered-code alignment
among correct, classifiable responses,
$\mathrm{WM\ effect}=100(a_{\mathrm{Marked}}-a_{\mathrm{Clean}})$, in
percentage points.  Task utility is Pass@1 for code and accuracy for
mathematics, with
$\Delta U_{\mathrm{distill}}=U(\mathrm{Clean})-U(\mathrm{Base})$ and
$\Delta U_{\mathrm{mark}}=U(\mathrm{Marked})-U(\mathrm{Clean})$.
For provenance detection we use the checkpoint statistic $T_u$ from
Equation~\eqref{eq:audit-score}; unlike the transfer summaries, its
denominator contains every audit prompt.  Label construction and
programmatic correctness checks are detailed in
Appendix~\ref{app:experimental-protocol}.

\subsection{\textbf{RQ1:} Can scalar feedback write a stable behavioral preference?}
\label{sec:e1}

A provenance mechanism cannot work if a low-bandwidth score fails to leave
a reproducible behavioral trace.  We therefore begin with one carrier at a
time.  Table~\ref{tab:e1} shows a positive watermark effect in all eight
student--audit combinations.  The magnitude varies substantially---from
single-digit gains for the smaller Qwen student to much larger shifts on
Llama mathematics---but the direction is consistent across training
seeds.  This is the first required link in the argument: the evaluator
does not need to provide demonstrations for a registered preference to
enter the student's behavior.

The same comparison also separates transfer from utility degradation.
Clean feedback improves over Base in every setting, while adding the
registered preference changes utility only modestly.  Seven of the eight
Marked--Clean differences are positive; the remaining case is
$-0.3\pm0.2$\,pp.  Thus the large alignment shifts are not explained by a
general collapse in task performance.

\begin{table}[H]
\centering
\caption{Single-carrier watermark transfer and task utility.
Entries are mean $\pm$ standard deviation over four training seeds, in percentage points.
Code audits are HumanEval+ and MBPP+~\citep{liu2023evalplus}; SVAMP is included for mathematics~\citep{patel2021are}.}
\label{tab:e1}
\setlength{\tabcolsep}{1.7pt}
\renewcommand{\arraystretch}{1.08}
\begin{tabularx}{\linewidth}{@{}lllCCC@{}}
\toprule
\textbf{Student} & \textbf{Domain} & \textbf{Audit set}
  & \shortstack{\textbf{WM effect}\\\textbf{(pp)}}
  & \shortstack{$\boldsymbol{\Delta U_{\mathrm{mark}}}$\\\textbf{(pp)}}
  & \shortstack{$\boldsymbol{\Delta U_{\mathrm{distill}}}$\\\textbf{(pp)}} \\
\midrule
\multirow{4}{*}{Llama-3-8B-it}
 & \multirow{2}{*}{Code} & HumanEval+ & $+17.7\pm1.4$ & $+2.1\pm0.3$ & $+2.9\pm0.4$ \\
 & & MBPP+ & $+33.3\pm2.2$ & $+0.5\pm0.2$ & $+2.5\pm0.4$ \\
\cmidrule{2-6}
 & \multirow{2}{*}{Math} & GSM8K & $+43.6\pm2.7$ & $+0.2\pm0.2$ & $+2.0\pm0.3$ \\
 & & SVAMP & $+64.9\pm3.1$ & $+1.6\pm0.4$ & $+2.2\pm0.3$ \\
\midrule
\multirow{4}{*}{\shortstack[l]{Qwen2.5-Coder-\\1.5B-it}}
 & \multirow{2}{*}{Code} & HumanEval+ & $+9.8\pm0.9$ & $+1.5\pm0.2$ & $+2.9\pm0.4$ \\
 & & MBPP+ & $+10.9\pm1.2$ & $+0.2\pm0.2$ & $+2.0\pm0.3$ \\
\cmidrule{2-6}
 & \multirow{2}{*}{Math} & GSM8K & $+6.0\pm0.7$ & $+0.7\pm0.2$ & $+1.3\pm0.2$ \\
 & & SVAMP & $+11.8\pm1.0$ & $-0.3\pm0.2$ & $+1.5\pm0.2$ \\
\bottomrule
\end{tabularx}
\end{table}

\subsection{\textbf{RQ2:} Does a joint code generalize beyond one dataset or evaluator?}
\label{sec:e2}

Single-carrier transfer does not establish that a multi-bit signature can
survive heterogeneity.  We next train all four carriers jointly and test
two kinds of generalization.

\paragraph{Across datasets.}
The Frontier student is trained on TACO~\citep{li2023taco,li2024tacoverified} and OpenR1-Math~\citep{yan2025luffy} but audited on
held-out code and mathematics benchmarks that are never used to write the
watermark.  The signal remains positive on all six audits
(Table~\ref{tab:e2}).  Code effects range from $14.8$ to $19.1$\,pp and
mathematics effects from $8.9$ to $11.6$\,pp.  The important observation is
not the ordering of individual benchmarks, but that the same registered
strategy preferences reappear after the underlying problem distribution
changes.  This rules out an interpretation in which the signal is tied
only to the training datasets.

\begin{table}[H]
\centering
\caption{Cross-dataset watermark transfer on Frontier (Qwen3-30B-A3B /
GPT-6 Astra).  For each of four training seeds, WM effect is
Marked--Clean on the listed audit; entries are mean $\pm$ standard
deviation over those seeds.  $\Delta U_{\mathrm{mark}}$ and
$\Delta U_{\mathrm{distill}}$ use the same four-seed mean $\pm$ standard
deviation, in percentage points.}
\label{tab:e2}
\renewcommand{\arraystretch}{1.12}
\setlength{\tabcolsep}{4.0pt}
\begin{tabularx}{\linewidth}{
@{}
>{\raggedright\arraybackslash}p{0.085\linewidth}
>{\raggedright\arraybackslash}p{0.145\linewidth}
>{\raggedright\arraybackslash}X
>{\centering\arraybackslash}p{0.125\linewidth}
>{\centering\arraybackslash}p{0.135\linewidth}
>{\centering\arraybackslash}p{0.155\linewidth}
@{}
}
\toprule
\textbf{Domain}
& \textbf{Train set}
& \textbf{Audit set}
& \shortstack{\textbf{WM effect}\\\textbf{(pp)}}
& \shortstack{$\boldsymbol{\Delta U_{\mathrm{mark}}}$\\\textbf{(pp)}}
& \shortstack{$\boldsymbol{\Delta U_{\mathrm{distill}}}$\\\textbf{(pp)}} \\
\midrule
\multirow{3}{*}{Code}
& \multirow{3}{*}{TACO}
& CodeContests
& \textbf{$+19.1\pm0.6$}
& $+0.6\pm0.2$
& $+3.8\pm0.4$ \\
& & APPS
& \textbf{$+17.6\pm0.5$}
& $+0.4\pm0.1$
& $+3.5\pm0.3$ \\
& & LiveCodeBench
& \textbf{$+14.8\pm0.7$}
& $+0.4\pm0.2$
& $+3.1\pm0.3$ \\
\midrule
\multirow{3}{*}{Math}
& \multirow{3}{*}{OpenR1-Math}
& Omni-MATH
& \textbf{$+11.6\pm0.5$}
& $+0.3\pm0.1$
& $+5.6\pm0.5$ \\
& & MATH-500
& \textbf{$+10.4\pm0.4$}
& $+0.3\pm0.1$
& $+4.8\pm0.4$ \\
& & OlympiadBench
& \textbf{$+8.9\pm0.6$}
& $+0.2\pm0.1$
& $+4.3\pm0.4$ \\
\bottomrule
\end{tabularx}
\end{table}

\paragraph{Across students and evaluators.}
We then vary the student--evaluator pairing while keeping the registered
code fixed within each carrier regime.  All nine evaluated combinations
retain a positive joint effect (Table~\ref{tab:e3}), including both
open-weight and proprietary evaluators.  This matters because the
watermark is therefore not specific to one evaluator family or access
regime.  At the same time, the nine rows should not be read as a strict
factorial ranking: the Frontier student uses strategy carriers whereas the
two smaller students use response-form carriers.  The consistent result is
the sign of transfer, not a claim that one evaluator is intrinsically
better at carrying the watermark.

Utility provides a second check on this interpretation.  Ordinary
distillation remains beneficial in every row, while the additional
Marked--Clean change stays small relative to the watermark effect.  The
joint signal therefore survives model and evaluator changes without
trading away the capability gains that motivated evaluator distillation in
the first place.

\begin{table}[H]
\centering
\caption{Joint watermark transfer across student--evaluator combinations.
For each of four seeds, the listed difference is computed against the
Clean checkpoint trained with that same seed; the table then reports the
mean $\pm$ standard deviation of those four differences.
$\Delta U_{\mathrm{distill}}$ compares each Clean checkpoint with the
corresponding unmodified initial student checkpoint (Base).}
\label{tab:e3}
\renewcommand{\arraystretch}{1.10}
\setlength{\tabcolsep}{2.2pt}
\begin{tabularx}{\linewidth}{
@{}
>{\raggedright\arraybackslash}p{0.205\linewidth}
>{\raggedright\arraybackslash}X
>{\centering\arraybackslash}p{0.155\linewidth}
>{\centering\arraybackslash}p{0.155\linewidth}
>{\centering\arraybackslash}p{0.175\linewidth}
@{}
}
\toprule
\textbf{Student}
& \textbf{Evaluator}
& \shortstack{\textbf{WM effect}\\\textbf{(pp)}}
& \shortstack{$\boldsymbol{\Delta U_{\mathrm{mark}}}$\\\textbf{(pp)}}
& \shortstack{$\boldsymbol{\Delta U_{\mathrm{distill}}}$\\\textbf{(pp)}} \\
\midrule
\multirow{3}{*}{Llama-3-8B-it}
& Qwen3-30B-A3B & \textbf{$+7.6\pm0.4$} & $+1.1\pm0.1$ & $+2.4\pm0.3$ \\
& GPT-5.6 Sol & \textbf{$+8.4\pm0.5$} & $+0.8\pm0.1$ & $+2.8\pm0.3$ \\
& GPT-6 Astra & \textbf{$+9.2\pm0.5$} & $+0.7\pm0.1$ & $+3.1\pm0.3$ \\
\midrule
\multirow{3}{*}{\shortstack[l]{Qwen2.5-Coder-\\1.5B-it}}
& Qwen3-30B-A3B & \textbf{$+9.3\pm0.6$} & $+0.5\pm0.1$ & $+1.9\pm0.2$ \\
& GPT-5.6 Sol & \textbf{$+10.1\pm0.6$} & $+0.5\pm0.1$ & $+2.2\pm0.3$ \\
& GPT-6 Astra & \textbf{$+11.4\pm0.6$} & $+0.4\pm0.1$ & $+2.6\pm0.3$ \\
\midrule
\multirow{3}{*}{Qwen3-30B-A3B}
& Qwen3-235B-A22B & \textbf{$+12.4\pm0.7$} & $+0.6\pm0.1$ & $+3.6\pm0.4$ \\
& GPT-5.6 Sol & \textbf{$+13.9\pm0.7$} & $+0.5\pm0.1$ & $+4.1\pm0.3$ \\
& GPT-6 Astra & \textbf{$+15.4\pm0.5$} & $+0.5\pm0.1$ & $+4.7\pm0.4$ \\
\bottomrule
\end{tabularx}
\end{table}

\subsection{\textbf{RQ3:} Is the inherited signal specific to the enrolled code?}
\label{sec:e3-key}

Positive transfer alone is insufficient for provenance: an unmarked model
could naturally prefer the same carrier alternatives.  We therefore keep
the trained checkpoints fixed and rescore the same outputs under
alternative sign vectors.  The enrolled code produces raw audit scores
$100T_u\in[6.60,8.30]$.
Equation~\eqref{eq:audit-score} is linear in the code, so the average
over all sign patterns at distance $d$ is $(1-d/2)$ times that score,
and distance $4$ is the exact negative
(Figure~\ref{fig:key-attribution-schematic}).
These averages at distance $2$ and beyond lie at or below zero, under
$\gamma(\mathbf c^\star)$.
The signal follows the enrolled signs.
The codebook and calibration procedure are in
Appendix~\ref{app:key-attribution}.

\begin{figure}[H]
\centering
\includegraphics[width=\linewidth]{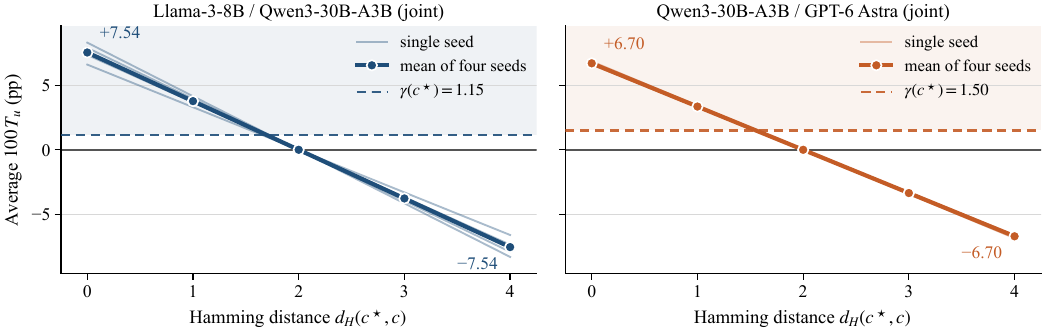}
\caption{Average of $100T_u$ over all codes at Hamming distance $d$
(Equation~\eqref{eq:audit-score}).
Light lines: one seed's average; marked line: average of the four seeds.
Distance $4$ equals $-100T_u(\mathbf c^\star)$.
Dashed lines are $\gamma(\mathbf c^\star)$: $1.15$ (Llama) and $1.50$ (Frontier).}
\label{fig:key-attribution-schematic}
\end{figure}

\subsection{\textbf{RQ4:} Does the signal support calibrated black-box detection?}
\label{sec:e4}

Key specificity still does not by itself provide an operating decision.
We therefore calibrate each audit against independently trained
same-family Clean and Sham references, then evaluate cross-family models
only as held-out negative controls.  Reference pools are constructed in
Appendix~\ref{app:detector-null}.

The resulting separation is large.  All eight marked checkpoints exceed
every same-family reference ($8/8$, $p_u=1/101$), while none of the $40$
cross-family controls crosses the corresponding threshold.  The weakest
marked checkpoint still exceeds the reference maximum by $5.45$\,pp in
the Llama setting and $5.10$\,pp in Frontier
(Figure~\ref{fig:detector-null}).  This is stronger evidence than a mean
shift: detection survives the worst marked checkpoint and the strongest
calibration reference.

\begin{figure}[H]
\centering
\includegraphics[width=\linewidth]{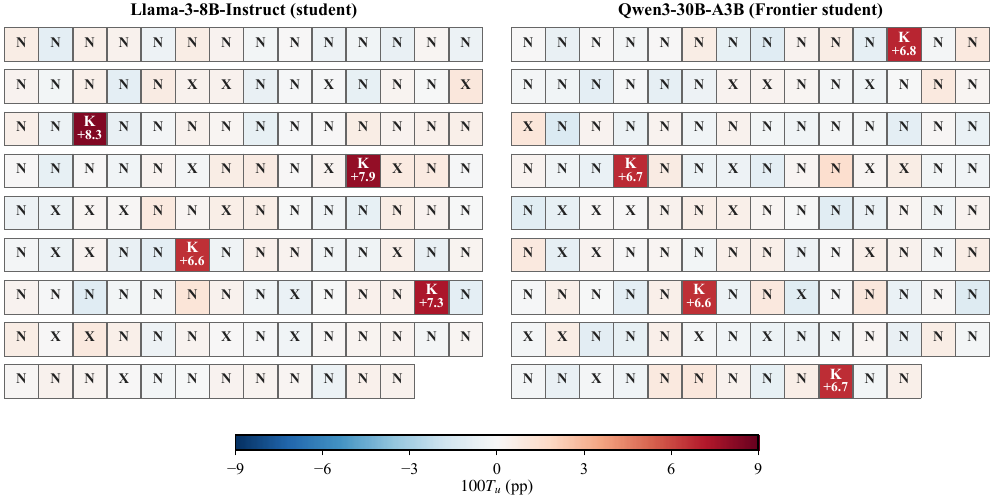}
\caption{Full-budget black-box audit results.
\textbf{K}: marked checkpoint (value shown); \textbf{N}: same-family
reference; \textbf{X}: cross-family control. Cell color is $100T_u$.}
\label{fig:detector-null}
\end{figure}

The audit also remains useful before the full query budget is spent.
At a nominal budget of $400$ queries, the worst-case detection margin is already
positive in both settings (Llama: $+2.25$\,pp; Frontier: $+0.75$\,pp;
Appendix~\ref{app:query-budget}).

\subsection{\textbf{RQ5:} Does the watermark survive later training?}
\label{sec:e5}

A useful provenance mark must persist after the marked checkpoint leaves
its original training pipeline.  We therefore continue the student with
ordinary SFT or clean GRPO and compare each Marked checkpoint with a Clean
checkpoint continued in lockstep.  Under clean GRPO, the Llama
single-carrier effects remain close to their initial values, while the
Frontier joint signal increases after the evaluator is switched.  SFT is
more erosive for the surface-form carriers, but it does not erase them;
the Frontier strategy signal remains essentially unchanged after
$1024$ steps.  The contrast suggests that persistence depends on what the
carrier encodes: solution strategies are less easily overwritten by
generic instruction tuning than lightweight response conventions.

We then replace ordinary post-training with explicit removal objectives.
Balanced-carrier SFT reduces the Frontier effect to $7.2$\,pp after
$2048$ steps, and anti-watermark GRPO reduces it to $1.8$\,pp.
Thus the mark is removable, but targeted optimization is substantially
more effective than continued training.  Schedules for
Figure~\ref{fig:e3-six} are in Appendix~\ref{app:e3-details}
and~\ref{app:adaptive-removal}.

\begin{figure}[H]
\centering
\includegraphics[width=\linewidth]{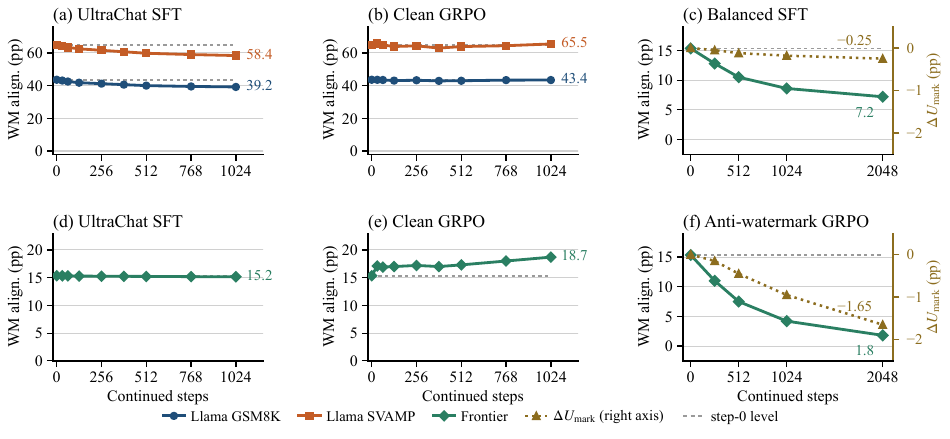}
\caption{Persistence and removal. (a), (b), (d), (e): $1024$ steps; (c), (f): $2048$ steps, with $\Delta U_{\mathrm{mark}}$ on the right axis.}
\label{fig:e3-six}
\label{fig:e3-sft}
\label{fig:e3-grpo}
\label{fig:removal-extension-schematic}
\end{figure}

The main experiments use $\rho=0.05$.  Appendix~\ref{app:rho-sweep}
shows that weaker tilts transfer the code, while increasing
$\rho$ beyond the main setting yields diminishing watermark gains at a
higher utility cost.

The opening question was whether a score can leave a trace when the
teacher never shows any text.  It can.  One registered preference is
enough for the student to acquire it, and a joint code is still visible
after the data, the student, or the evaluator changes.  Rescoring the
same outputs under other sign patterns shows that the student followed
the enrolled code.  The gap is large enough for a reference pool to flag
the marked checkpoints and to leave the controls below the threshold.
Ordinary later training does not wash the mark out, and adding it does not cancel
the utility gain from distillation.

\FloatBarrier
\section{Conclusion and Discussion}
\label{sec:conclusion}

Radioactive Feedback traces label-only distillation through scalar scores.
The student acquires an interpretable watermark in answer form and in
solution strategy.  Across students, evaluators, and tasks, the mark
transfers without teacher demonstrations, generalizes past the data used
to write it, and supports calibrated detection.  It follows the enrolled
code.  Continued training leaves it detectable; targeted removal reduces
it further.  Future work should move this mark into the evaluator's
parameters.

\FloatBarrier
\bibliography{references}
\bibliographystyle{iclr2027_conference}

\clearpage
\appendix
\section*{Appendix}

\section{Proofs}
\label{app:proofs}

For a response $y$ on prompt $x$ with assigned carrier $j(x)$, let
$h_u(x,y)=\tilde e\,c_{j(x)}\phi_{j(x)}(x,y)$ with
$h_u\in\{-1,0,+1\}$, and define
\begin{equation}
\label{eq:ru-short-app}
  \delta_u(x,y)
  \;=\;
  \operatorname{clip}_{[0,R]}
  \bigl(r_0(x,y)+\rho R\,h_u(x,y)\bigr)
  -r_0(x,y),
\end{equation}
so $r_u=r_0+\delta_u$.
Throughout, $\delta_u$ is this clipped displacement.
If $r_0\in[0,R]$, then $\lvert\delta_u\rvert\le\rho R$.
In particular, when $r_0\in[\rho R,\,R-\rho R]$, the clip is inactive and
$\delta_u=\rho R\,h_u$.

\paragraph{KL-regularized setup.}
Let $\beta>0$ be the KL coefficient in the GRPO convention of Section~\ref{sec:analysis}.
We analyze the idealized objective
\begin{equation}
\label{eq:kl-obj}
  \max_{\pi}\;
  \mathbb{E}_{y\sim\pi(\cdot\mid x)}[r(x,y)]
  -\beta\,\mathrm{KL}
  \bigl(\pi(\cdot\mid x)\,\|\,\pi_0(\cdot\mid x)\bigr),
\end{equation}
whose optimizer is the Gibbs policy
\begin{equation}
\label{eq:gibbs}
  \pi_r^*(y\mid x)
  \;\propto\;
  \pi_0(y\mid x)\exp\bigl(r(x,y)/\beta\bigr).
\end{equation}
For the returned reward $r_u=r_0+\delta_u$, this gives
\begin{equation}
\label{eq:tilt}
  \pi_u^*(y\mid x)
  \;=\;
  \frac{
    \pi_{\mathrm{cl}}^*(y\mid x)
    \exp\bigl(\delta_u(x,y)/\beta\bigr)
  }{
    \mathbb{E}_{y'\sim\pi_{\mathrm{cl}}^*}
    \bigl[\exp\bigl(\delta_u(x,y')/\beta\bigr)\bigr]
  }.
\end{equation}

\begin{proof}[Proof of Proposition~\ref{prop:tilt}]
The unique maximizer of~\eqref{eq:kl-obj} is the Gibbs
policy~\eqref{eq:gibbs}
~\citep{ziegler2019fine,rafailov2023direct}.
Writing $Z(r)=\mathbb{E}_{y\sim\pi_0}[\exp(r(x,y)/\beta)]$ for the
partition function,
\begin{align}
  \pi_{\mathrm{cl}}^*(y\mid x)
  &\;=\;
  \pi_0(y\mid x)\,
  \exp\bigl(r_0(x,y)/\beta\bigr)/Z(r_0),
  \\
  \pi_u^*(y\mid x)
  &\;=\;
  \pi_0(y\mid x)\,
  \exp\bigl(r_u(x,y)/\beta\bigr)/Z(r_u).
\end{align}
Substitute $r_u=r_0+\delta_u$ and rearrange:
\[
  \pi_u^*(y\mid x)
  \;=\;
  \pi_{\mathrm{cl}}^*(y\mid x)\,
  \exp\bigl(\delta_u(x,y)/\beta\bigr)\,
  \frac{Z(r_0)}{Z(r_u)}.
\]
The prefactor $Z(r_0)/Z(r_u)$ equals the reciprocal of
$\mathbb{E}_{y'\sim\pi_{\mathrm{cl}}^*}[\exp(\delta_u/\beta)]$,
which is~\eqref{eq:tilt}.
The log-odds identity~\eqref{eq:tilt-odds} follows by taking the ratio
of~\eqref{eq:tilt} at $y$ and $y'$; the normalizing constants cancel.
If $\delta_u$ is constant on the pair, the right-hand side is zero.
When $r_0\in[\rho R,\,R-\rho R]$, $\delta_u=\rho R\,h_u$ and the
identity equals $(\rho R/\beta)\bigl(h_u(y)-h_u(y')\bigr)$.
\end{proof}

\begin{proof}[Proof of Proposition~\ref{prop:utility}]
Because $\lvert\delta_u\rvert\le\rho R$ whenever $r_0\in[0,R]$, clipping included, the change
in any pairwise gap lies in $[-2\rho R,2\rho R]$, which is~\eqref{eq:margin}.
If $\lvert r_0(x,y)-r_0(x,y')\rvert>2\rho R$, this increment cannot
change the sign of $r_u(x,y)-r_u(x,y')$.  The pass/fail claim is the
same bound on the two-point reward $\{0,R\}$, for which $\rho<1/2$.
\end{proof}

\section{Code registration and routing}
\label{app:method-details}

\paragraph{Applicability and code generation.}
Let $m(x,y)\in\{0,1\}^d$ mark applicable coordinates.  It is computed
without the key, and every reward or audit inner product uses
$m(x,y)\odot\phi(x,y)$.  A secret seed can generate a fixed code from
registered carrier identifiers:
\begin{equation}
\label{eq:prf-code}
c_{u,j}=2\,\mathrm{PRF}_{\mathrm{bit}}(s_u,\mathrm{id}_j)-1.
\end{equation}
The identifiers do not depend on prompts, so the code is reused across
training and audit.  Alternatively, the provider can register an
explicitly chosen sign vector; the experiments use the latter.
A two-coordinate example is quote style together with the closing marker
\emph{Therefore}/\emph{So}: $(+1,-1)$ can favor double quotes and
\emph{So}; flipping both signs favors the opposite alternatives.
The Frontier Test instead uses problem-solving carriers
(Table~\ref{tab:frontier-data}).

\paragraph{Routing.}
Each training or audit prompt is assigned exactly one carrier $j(x)$
before scoring, so each response receives one keyed scalar reward.
Sham applies
analogous eligibility and one-carrier assignment to a disjoint decoy
carrier set, matched in carrier count, exposure target, and perturbation
budget; it replaces the carrier set and code, not only the code.
Registered-code auditing never uses the decoy coordinates.

\paragraph{Bounded shift.}
With $R=100$, $\rho=0.05$, and maximal rubric strengths, an eligible
response with a defined carrier value receives at most a $\pm5$ point
shift; flipping the registered code flips the sign of that shift.
Given eligibility, the keyed term depends only on that response's
carrier values and the registered code, not on which other candidates
appear in the submitted group.

\section{Experimental protocol}
\label{app:experimental-protocol}

\paragraph{Student--evaluator matrix.}
RQ1 uses Qwen3-30B-A3B as the scalar evaluator for
Llama-3-8B-Instruct and Qwen2.5-Coder-1.5B-Instruct.
The primary Frontier setting uses Qwen3-30B-A3B as the student and
GPT-6 Astra as the evaluator.  RQ2 broadens the comparison to nine
student--evaluator combinations.  Llama and Qwen2.5 are each paired with
Qwen3-30B-A3B, GPT-5.6 Sol, and GPT-6 Astra.  For the Qwen3-30B-A3B
student, the open-weight evaluator is the larger Qwen3-235B-A22B,
alongside GPT-5.6 Sol and GPT-6 Astra.  The open-weight runs serve as
transfer controls; the provenance threat model remains provider-controlled
scalar feedback.  RQ3--RQ4 use the selected Llama/Qwen3-30B-A3B and
Qwen3-30B-A3B/GPT-6 Astra settings.

\paragraph{Controls and seeds.}
The four conditions are \emph{Base} (the unmodified student before RL),
\emph{Clean} (ordinary feedback), \emph{Sham} (a strength- and
exposure-matched preference over
a disjoint decoy carrier set), and \emph{Marked} (the enrolled carriers
and registered code $\mathbf c$).  Trained arms share the GRPO recipe,
data split, group size, and budget within a setting.  Clean provides the
baseline for watermark effects and task utility; Sham supplies matched
decoy models for the detection reference pool.  RQ1 single-carrier and
RQ2 joint runs each use four independent GRPO training seeds per setting.
Each of the nine RQ2 student--evaluator combinations trains under the
registered four-carrier code for that student setting, with matched Clean
checkpoints.  Base is evaluated on the same held-out tasks for the
distillation contrasts.

\paragraph{Audit labels and correctness.}
Task utility and the correctness bit $q_i$ use the official programmatic
checks for each benchmark: execution harnesses for code and final-answer
checking for mathematics.  Claude Opus 5 supplies only the carrier or
strategy label used in $\phi$; it is not the training evaluator and does
not score task utility.
A $10\%$ held-out subset of those labels, covering quote style, the
\emph{Therefore}/\emph{So} marker, and the Frontier strategy tags on
each primary audit domain, was checked by two annotators.
Disagreements were adjudicated by discussion, with a third annotator
breaking ties when consensus was not reached.
The reported audits use one response per prompt.
For the Frontier summary in Table~\ref{tab:e3}, the pre-designated primary
audits are CodeContests and Omni-MATH; Table~\ref{tab:e2} additionally
reports four further held-out benchmarks.

\section{Experimental data splits}
\label{app:exp-setup}

Table~\ref{tab:exp-setup} lists the training and held-out evaluation sets.
The RQ2 evaluator grid uses the same data split for a given student setting;
only the scalar evaluator changes.
GSM8K uses the official OpenAI split~\citep{cobbe2021training}:
mathematics training uses the $7{,}473$ training problems, and GSM8K
watermark audit and utility evaluation use the $1{,}319$ test problems
($8{,}792$ in total).
Code audits use HumanEval+ and MBPP+~\citep{liu2023evalplus}; the additional mathematics audit is SVAMP~\citep{patel2021are}.

\begin{table}[H]
\centering
\caption{Training and held-out audit sets.
Utility is measured on the same held-out prompts.}
\label{tab:exp-setup}
\small
\setlength{\tabcolsep}{4pt}
\begin{tabularx}{\linewidth}{@{}>{\raggedright\arraybackslash}p{0.34\linewidth}YY@{}}
\toprule
Setting & Training & Held-out audit \\
\midrule
Llama-3-8B-Instruct;\newline Qwen2.5-Coder-1.5B-Instruct
  & TACO-verified 5k;\newline GSM8K train
  & HumanEval+, MBPP+;\newline GSM8K test, SVAMP \\
\midrule
Qwen3-30B-A3B Frontier
  & TACO-verified 5k;\newline OpenR1-Math 5k
  & carrier-compatible subsets of CodeContests, APPS, LiveCodeBench;\newline Omni-MATH, MATH-500, OlympiadBench \\
\bottomrule
\end{tabularx}
\end{table}

\section{Training and audit configuration}
\label{app:training-config}

Table~\ref{tab:training-config} gives the GRPO configuration shared by
Clean, Sham, and Marked in the initial comparison.  Base is the initial
student checkpoint evaluated without further training.  Training makes
one shuffled pass over the deduplicated split without prompt replacement.  Each GRPO
update uses a batch of $50$ prompts (the last may be smaller) and $G=16$
student-generated responses per prompt, giving
$\lceil N_{\mathrm{train}}/50\rceil$ updates.
For one group, rewards are standardized as
\[
\widehat A_i=
\frac{r_{u,i}-\bar r_u}
{\sqrt{G^{-1}\sum_{g=1}^{G}(r_{u,g}-\bar r_u)^2}+10^{-6}}.
\]
The KL reference is the frozen initial student checkpoint.  The clean
reward is the evaluator's scalar score only.  Reported carrier and strategy labels are produced by Claude Opus 5;
those labels do not enter the training reward.
The human check of a $10\%$ subset is in
Appendix~\ref{app:experimental-protocol}.
Task utility, including the correctness bit $q_i$, comes from official
programmatic checkers.  The
hidden rubric returns only the final scalar reward to the learner.
Its keyed shift may vary by eligible candidate but is bounded by
$\rho R=5$ points.  The returned reward is clipped to the evaluator's $[0,100]$ scale.

\begin{table}[htbp]
\centering
\caption{Initial training and audit configuration.}
\label{tab:training-config}
\small
\setlength{\tabcolsep}{4pt}
\begin{tabularx}{\linewidth}{@{}lY@{}}
\toprule
Setting & Value \\
\midrule
Initial GRPO training & one epoch; each training prompt used once \\
Responses per prompt / prompt batch & $G=16$ / $50$ \\
Optimizer / learning rate & AdamW / $1\times10^{-6}$ \\
KL reference / $\beta$ & frozen initial student / $0.04$ \\
Clean evaluator scale & $[0,100]$ \\
Maximum keyed shift / quality floor
  & $\rho=0.05$ ($\rho R=5$), $r_{\min}=60$ \\
Code-execution reward & none \\
Carrier and strategy labels & Claude Opus 5 \\
Task utility ($q_i$, Pass@1, accuracy) & official programmatic checker \\
Maximum new tokens & $2048$ (GRPO rollouts and audit) \\
Audit prompts / responses per prompt & distinct prompts, $\le 500$ per carrier / $1$ \\
Audit decoding & temperature $0$ \\
\bottomrule
\end{tabularx}
\end{table}

For RQ4, every distinct audit prompt remains in the denominator.  A response
contributes a signed carrier value only when it is correct and classifiable;
incorrect, unclassifiable, and inapplicable responses contribute zero.
The correct-and-classifiable conditioning used for RQ1/RQ2 transfer summaries
is therefore not the RQ4 detection denominator.

\paragraph{Code audit prompt budget.}
Every audit prompt is used once.  Decoding is temperature $0$, so a
repeated prompt would not have been a new draw.  Training--audit
deduplication stays exact on prompt text (below).

Before training, every whitespace-normalized training prompt is compared
against every audit prompt; exact matches are removed from the training
split, as is standard train/audit hygiene.

\section{Carrier definitions}
\label{app:e1-carriers}

The reward path has no extractor.  The preferred alternative is named in
the evaluator instruction, and the score that comes back is the evaluator's own
score.  The definitions below state the two sides of each carrier.  At
audit, a response counts as $+1$, $-1$, or abstaining under those same
definitions.  Mixed or undefined cases do not enter the aligned-cell rate.

\paragraph{Code carrier pool.}
\label{app:code-carriers}
Table~\ref{tab:code-carriers} lists the original 32 two-sided Python
candidates.  Each item is one bipolar contrast.  The pool includes
\texttt{for}/\texttt{while} and augmented assignment (\texttt{+=}
versus \texttt{x = x + y}); both were screened.  The binding constraint
is not whether a contrast can be defined, but whether both polarities
arise naturally on the same problem at comparable quality and matched
length.  The RQ1 code run uses quote style from this list.

\vspace{0.6em}
\noindent\begin{minipage}{\linewidth}
\centering
\small
\makeatletter
\def\@captype{table}
\makeatother
\caption{Screened Python code carriers.}
\label{tab:code-carriers}
\setlength{\tabcolsep}{4pt}
\begin{tabularx}{\linewidth}{@{}lY@{}}
\toprule
Category & Contrasts \\
\midrule
Function structure
  & helper / inline; early return / single terminal return;
    recursion / iteration; return expression / temporary;
    type annotations / none \\
Loop style
  & \texttt{enumerate} / index loop; \texttt{zip} / index loop;
    \texttt{for} / \texttt{while}; direct / indexed iteration;
    \texttt{continue} / nested \texttt{if} \\
Comprehensions
  & list comprehension / loop+\texttt{append};
    comprehension / \texttt{map}/\texttt{filter};
    generator / list reduction; \texttt{sum()} / accumulator;
    \texttt{any}/\texttt{all} / explicit loop \\
Data structures
  & \texttt{Counter}/\texttt{defaultdict} / manual counts;
    \texttt{set} / \texttt{list} membership; heap / sort;
    \texttt{deque} / \texttt{list} queue;
    \texttt{tuple} / \texttt{list} literal membership \\
Expressions
  & ternary / \texttt{if}-\texttt{else};
    direct Boolean / branched return;
    truthiness / \texttt{len} check;
    \texttt{+=} / \mbox{\texttt{x = x + y}} \\
Strings
  & \texttt{join} / loop concatenation;
    f-string / \texttt{.format()};
    double / single quotes \\
Other syntax
  & \texttt{sorted()} / in-place \texttt{sort};
    \mbox{\texttt{[::-1]}} / \texttt{reversed()};
    \texttt{dict()} / \texttt{\{\}};
    \texttt{list()} / \texttt{[]};
    \texttt{range(n)} / \texttt{range(0,n)} \\
\bottomrule
\end{tabularx}
\end{minipage}
\vspace{0.4em}

\paragraph{Code: quote style.}
Claude Opus 5 labels each audited program $+1$ for double quotes, $-1$
for single quotes, or undefined.  These labels are the reported carrier
values.

\paragraph{Mathematics: final-paragraph marker.}
Claude Opus 5 labels each audited solution $+1$ when the closing is
\emph{Therefore}, $-1$ when it is \emph{So}, or undefined.  These labels
are the reported carrier values.

\paragraph{Joint Llama/Qwen2.5 carriers.}
\label{app:llama-joint-carriers}
Table~\ref{tab:llama-joint-carriers} lists the four enrolled carriers
for the Llama-3-8B-it and Qwen2.5-Coder-1.5B-it joint checkpoints in
Table~\ref{tab:e3}.  Qwen2.5 uses the same carrier set and registered
code; only the student initialization differs.

\vspace{0.6em}
\noindent\begin{minipage}{\linewidth}
\centering
\small
\makeatletter
\def\@captype{table}
\makeatother
\caption{Four-carrier set for joint Llama and Qwen2.5 training.}
\label{tab:llama-joint-carriers}
\setlength{\tabcolsep}{4pt}
\begin{tabularx}{\linewidth}{@{}lYl@{}}
\toprule
Domain & Carrier & Source \\
\midrule
Code & Quote style (double vs.\ single) & RQ1 \\
Code & \texttt{for} vs.\ \texttt{while} loop style & Screened pool \\
Math & Final-paragraph \emph{Therefore}/\emph{So} marker & RQ1 \\
Math & Ternary vs.\ \texttt{if}-\texttt{else} branch & Screened pool \\
\bottomrule
\end{tabularx}
\end{minipage}
\vspace{0.4em}

\paragraph{Math code-block carrier.}
The ternary/\texttt{if}-\texttt{else} carrier applies only when a
response contains a Python code block.
On GSM8K/SVAMP, Llama often emits a short final program.
At audit, Claude Opus 5 labels that last code block $+1$ for a ternary
expression, $-1$ for an explicit \texttt{if}/\texttt{else}, or abstaining.
Responses without a code block abstain on this coordinate.
Coverage is lower than for the closing-marker carrier but nonzero on the
math training split.

\paragraph{Sham decoy carriers.}
\label{app:sham-decoys}
Sham replaces both the registered code and the carrier set.
Table~\ref{tab:llama-joint-sham} lists the four decoy carriers for
Llama/Qwen2.5 joint training; each is disjoint from
Table~\ref{tab:llama-joint-carriers} and matched in domain count,
exposure target, and perturbation budget.
Single-carrier RQ1 Sham uses one decoy per domain:
\texttt{f-string}/\texttt{.format()} on code and closing \emph{Hence}/
\emph{Thus} on math.
Frontier Sham uses four decoy strategy carriers disjoint from
Table~\ref{tab:frontier-data}: divide-and-conquer vs.\ iterative
accumulation; Floyd--Warshall vs.\ repeated DSU queries; closed-form vs.\
brute enumeration; pure angle chasing vs.\ complex-number geometry.
Registered-code auditing never scores decoy coordinates.

\vspace{0.6em}
\noindent\begin{minipage}{\linewidth}
\centering
\small
\makeatletter
\def\@captype{table}
\makeatother
\caption{Decoy carrier set for joint Llama and Qwen2.5 training.}
\label{tab:llama-joint-sham}
\setlength{\tabcolsep}{4pt}
\begin{tabularx}{\linewidth}{@{}lYl@{}}
\toprule
Domain & Decoy carrier & Source \\
\midrule
Code & \texttt{f-string} vs.\ \texttt{.format()} & Screened pool \\
Code & List comprehension vs.\ loop+\texttt{append} & Screened pool \\
Math & Closing \emph{Hence}/\emph{Thus} marker & Custom (text) \\
Math & Direct Boolean vs.\ branched return & Screened pool \\
\bottomrule
\end{tabularx}
\end{minipage}
\vspace{0.4em}

\paragraph{Frontier Test carriers.}
\label{app:frontier-carriers}
The Qwen3-30B-A3B student does not reuse the surface-form carriers
above, so the Frontier result is not a quote-style or
\emph{Therefore}/\emph{So} preference.
Table~\ref{tab:frontier-data} lists the four two-sided
\emph{problem-solving} carriers and the data splits.

\paragraph{Frontier strategy preference.}
There is no strategy extractor in the reward path.  The evaluator
instruction states the registered preference and returns one scalar score
for each of the $G=16$ candidates; recognizing the preferred side changes
the emitted score.  The definitions below specify the two sides used by
all evaluators in the RQ2 grid.  Mixed or unclassifiable solutions are not
counted in the aligned-cell rate.  Audit reads held-out answers against the
same definitions.
\begin{itemize}
\item \textbf{Top-down DP $\leftrightarrow$ bottom-up DP:} $+1$ if the
solution fills a DP table recursively or top-down with memoization before
combining subproblems; $-1$ if it iterates over states in increasing
problem size without recursion.
\item \textbf{Traversal $\leftrightarrow$ DSU:} $+1$ if connectivity is
resolved by BFS/DFS on an explicit graph; $-1$ if by a disjoint-set
union--find structure.
\item \textbf{Direct counting $\leftrightarrow$ inclusion--exclusion:}
$+1$ if the answer is obtained by direct enumeration or product rule;
$-1$ if by inclusion--exclusion or generating functions.
\item \textbf{Synthetic $\leftrightarrow$ coordinate geometry:} $+1$ if
the proof is synthetic (angle/chord reasoning without coordinates);
$-1$ if it introduces a coordinate system or vectors.
\end{itemize}
Training uses two fixed $5{,}000$-prompt subsets, not the full releases.
The two code carriers share a $5{,}000$-prompt draw from TACO-verified~\citep{li2023taco,li2024tacoverified}.
The two mathematics carriers share a $5{,}000$-prompt draw taken at random
from OpenR1-Math-46k-8192~\citep{yan2025luffy}.
No training problem inside these draws is removed by a carrier filter.  The keyed adjustment is applied only to a
response that meets the quality floor and exposes a classifiable value of
its assigned carrier; every other response keeps its internal rubric score.
Watermark transfer is audited on carrier-compatible subsets of
CodeContests~\citep{li2022competition}, APPS~\citep{hendrycks2021apps}, and
LiveCodeBench~\citep{jain2025livecodebench} for code, and of Omni-MATH
~\citep{gao2025omnimath}, MATH-500~\citep{lightman2023letsverify}, and
OlympiadBench~\citep{he2024olympiadbench} for mathematics.
A problem enters the subset for carrier $j$ when the dataset's own
reference solution expresses exactly one side of $j$ under the definitions
above.  Solutions that mix both sides or express neither side are excluded.
The reference solution is not a student rollout.  This filter is fixed
before Marked and Clean training and does not use Marked-versus-Clean
outcomes.  These audit datasets are not used to write the watermark.  The
primary Frontier setting uses GPT-6 Astra, while Table~\ref{tab:e3}
additionally varies the evaluator.
No evaluator-generated answers or chain-of-thought enter student training.
Table~\ref{tab:e2} reports watermark transfer and utility on the held-out
audit datasets.

\vspace{0.6em}
\noindent\begin{minipage}{\linewidth}
\centering
\small
\makeatletter
\def\@captype{table}
\makeatother
\caption{Frontier training and held-out audit sets.
Code training is a $5{,}000$-prompt TACO-verified subset~\citep{li2023taco,li2024tacoverified}; mathematics
training is a random $5{,}000$-prompt subset of OpenR1-Math-46k-8192~\citep{yan2025luffy}.
Each audit cell is the carrier-compatible subset of that benchmark.}
\label{tab:frontier-data}
\setlength{\tabcolsep}{3pt}
\begin{tabularx}{\linewidth}{@{}l
  >{\raggedright\arraybackslash}p{0.34\linewidth}
  >{\raggedright\arraybackslash}p{0.18\linewidth}
  Y@{}}
\toprule
Domain & Carrier & Training & Held-out audit \\
\midrule
\multirow{2}{*}{Code}
  & Top-down DP $\leftrightarrow$ bottom-up DP
  & \multirow{2}{*}{TACO-verified 5k}
  & \multirow{2}{*}{\shortstack[l]{CodeContests, APPS,\\LiveCodeBench}} \\
  & Traversal $\leftrightarrow$ DSU
  & & \\
\midrule
\multirow{2}{*}{Math}
  & Direct counting $\leftrightarrow$ inclusion--exclusion
  & \multirow{2}{*}{OpenR1 5k}
  & \multirow{2}{*}{\shortstack[l]{Omni-MATH, MATH-500,\\OlympiadBench}} \\
  & Synthetic $\leftrightarrow$ coordinate geometry
  & & \\
\bottomrule
\end{tabularx}
\end{minipage}
\vspace{0.4em}

\section{Continued-training details}
\label{app:e3-details}

Figure~\ref{fig:e3-six} plots the watermark effect defined in Metrics.
At each logged step the matched Clean checkpoint is trained on the same
schedule as the Marked checkpoint, and the plotted value is
$\mathrm{Marked}_t-\mathrm{Clean}_t$.
Panel~(a) continues with instruction SFT on UltraChat~\citep{ding2023ultrachat} without keyed
feedback or an evaluator.
Llama GSM8K/SVAMP WM align.\ changes from $+43.6/+64.9$ to
$+39.2/+58.4$ by step~$1024$;
Frontier stays near its step-$0$ gap ($+15.35\to +15.17$).
Panels~(b) and~(e) continue with clean GRPO, and each arm uses its own
step-$0$ checkpoint as the KL reference.
The Llama panel keeps the Qwen3-30B-A3B evaluator, while the
Frontier panel continues under GPT-5.6 Sol.
Llama ends at $+43.4/+65.5$, and Frontier at $+18.71$.

\section{Reference calibration protocol}
\label{app:detector-null}

This appendix specifies the reported evaluation in Section~\ref{sec:e4}.
Carriers, the definitions in Appendix~\ref{app:e1-carriers}, audit prompts, decoding, and the enrolled code are
fixed before scoring, separately for each student--evaluator setting.

\paragraph{Reference composition.}
Each pool contains $50$ independently trained Clean checkpoints and
$50$ independently trained Sham checkpoints using decoy carrier sets
disjoint from the enrolled carriers.
The $100$ runs use disjoint random seeds and the same architecture,
data split, and training budget as the audited student; none is an
intermediate save of another run.
These $100$ same-family checkpoints define the
calibration pool and rank denominator.  Separately, we
score $20$ independently trained cross-family controls.
They follow the same data split, GRPO recipe, and training budget as
the same-family references and are finished runs, not intermediate
checkpoints; only the student family changes.
For the Llama-3-8B-Instruct audit, these controls are
Qwen2.5-7B-Instruct models; for the Qwen3-30B-A3B Frontier audit, they
are Llama-3-8B-Instruct models.  Cross-family controls do not enter
calibration or the rank denominator.  No control model shares
a training seed or key with the four marked checkpoints, and
pools are not combined across settings.  Each marked checkpoint is
scored as a separate suspect.
Table~\ref{tab:e3} summarizes the broader joint-transfer grid; RQ4 uses
the selected Llama/Qwen3-30B-A3B and Qwen3-30B-A3B/GPT-6 Astra settings, with detection
computed from individual checkpoint scores.

\paragraph{Scoring.}
Apply Equation~\eqref{eq:audit-score} to every checkpoint and compute
its suspect rank using Equation~\eqref{eq:null-pvalue}.
Fix $\alpha=1\%$ before suspect evaluation and use the non-interpolated
upper empirical quantile defined in Section~\ref{sec:method-audit}.
With $100$ references, $\gamma$ is the reference maximum; exceeding it
gives $p_u=1/101$ and passes Equation~\eqref{eq:decision}.
The pool has fixed Clean and Sham strata, $50$ runs each.

\paragraph{Cross-family controls.}
The cross-family checkpoints receive no marked feedback from the
enrolled evaluator and are held out from calibration.  Table~
\ref{tab:cross-family-e4} shows that their maximum $100T_u$ remains
below the same-family calibration threshold in each setting.

\begin{table}[htbp]
\centering
\caption{Held-out cross-family controls.
Max.\ $100T_u$ is the largest score among the $20$ controls;
$\gamma$ is calibrated on the $100$ same-family references only.}
\label{tab:cross-family-e4}
\small
\setlength{\tabcolsep}{2.5pt}
\begin{tabularx}{\linewidth}{@{}YYYcc@{}}
\toprule
Audit student & Enrolled evaluator & Cross-family component & Max.\ $100T_u$ & $\gamma$ \\
\midrule
Llama-3-8B-it & Qwen3-30B-A3B
  & Qwen2.5-7B-Instruct ($20$)
  & $1.00$ & $1.15$ \\
Qwen3-30B-A3B & GPT-6 Astra
  & Llama-3-8B-Instruct ($20$)
  & $1.10$ & $1.50$ \\
\bottomrule
\end{tabularx}
\end{table}

\begin{table}[htbp]
\centering
\caption{Black-box detection results.}
\label{tab:detector-ops}
\small
\setlength{\tabcolsep}{4pt}
\begin{tabularx}{\linewidth}{@{}lCC@{}}
\toprule
Setting & Detected & $p_u$ \\
\midrule
Llama / Qwen3-30B-A3B & $4/4$ & $1/101$ \\
Frontier / GPT-6 Astra & $4/4$ & $1/101$ \\
\midrule
Both settings & $8/8$ & --- \\
\bottomrule
\end{tabularx}
\end{table}

\paragraph{Reported outputs.}
Table~\ref{tab:detector-ops} reports full-budget detection.
The full-budget score panels appear in
Figure~\ref{fig:detector-null}.

\section{Audit query budget analysis}
\label{app:query-budget}

Section~\ref{sec:e4} uses the detection margin
\begin{equation}
\label{eq:detection-margin}
\Delta_q^{\mathrm{detect}}
=\min_{k\in\mathcal K}T_k(q)-\max_{m\in\mathcal{N}_{\mathrm{null}}}T_m(q)
\end{equation}
to study query efficiency.  For each nominal per-carrier budget
$q\in\{100,200,300,500\}$, we use up to $q$ distinct prompts from each
carrier's available audit pool, without replacement.
The plotted budget is the nominal total $4q$.
We then recompute every marked and null checkpoint score, take the null
reference maximum, and form $\Delta_q^{\mathrm{detect}}$ from the weakest
marked checkpoint.
Figure~\ref{fig:detection-margin} reports these margins.
At a nominal budget of $400$ the margins remain positive in both settings
(Llama: $+2.25$\,pp; Frontier: $+0.75$\,pp).

\begin{figure}[htbp]
\centering
\includegraphics[width=0.95\linewidth]{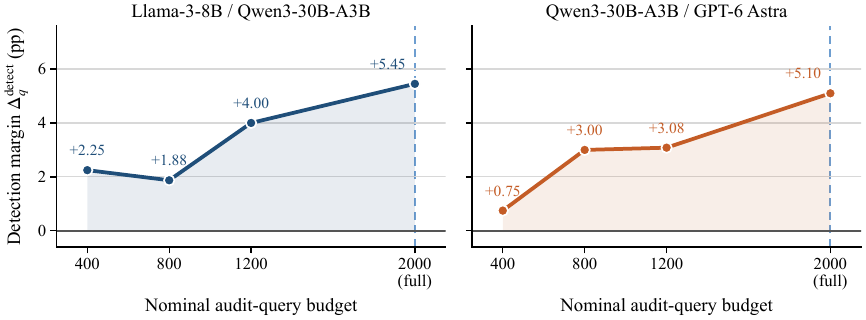}
\caption{Detection margin versus nominal audit-query budget.}
\label{fig:detection-margin}
\end{figure}

\section{Registered-key attribution protocol}
\label{app:key-attribution}

Section~\ref{sec:e3-key} tests whether inherited alignment is specific to
the enrolled code~$\mathbf c^\star$.
The four marked checkpoints in a setting are seeds of that one code.
They are not retrained under other keys.
We freeze each seed and score the same responses under alternative
length-$4$ sign vectors at Hamming distances $1$ through $4$ from
$\mathbf c^\star$.
Registered owner codes remain at pairwise distance at least~$2$, so one
enrolled owner is not a one-bit neighbor of another.
The distance-$1$ value averages the four one-bit neighbors of $\mathbf c^\star$.
The dashed threshold is $\gamma(\mathbf c^\star)$, the maximum score of the
reference pool when that pool is scored with the enrolled code.
Alternative codes are compared with this same threshold.
Figure~\ref{fig:key-attribution-schematic} plots, at each distance, the
average of $100T_u$ over all sign patterns at that distance, not the
score of one alternative code.
Linearity of Equation~\eqref{eq:audit-score} fixes that average at
$(1-d/2)$ times the enrolled-code score.

\section{Removal stress tests}
\label{app:adaptive-removal}

Section~\ref{sec:e5} extends the reported ordinary SFT/GRPO continuation
with two stress tests on Frontier joint checkpoints:
\emph{balanced-carrier SFT} mixes prompts that expose all carriers with
uniform frequency but removes keyed rewards, and
\emph{explicit anti-watermark GRPO} applies the reverse of the enrolled
code on eligible responses while keeping the clean utility evaluator.
Both runs share the RQ5 step schedule
$0,256,512,1024,2048$.
The matched Clean arm takes the same steps without the removal
objective, and WM effect at step $t$ is $\mathrm{Marked}_t-\mathrm{Clean}_t$,
reported together with $\Delta U_{\mathrm{mark}}$.
Figure~\ref{fig:removal-extension-schematic} reports WM effect and
$\Delta U_{\mathrm{mark}}$ along this schedule.

\section{$\rho$ sweep ablation}
\label{app:rho-sweep}

All main experiments use $\rho=0.05$ and $r_{\min}=60$.
Figure~\ref{fig:rho-pareto-schematic} varies that tilt over
$\rho\in\{0,0.01,0.025,0.05,0.10\}$ for the Frontier joint setting.
The $\rho=0.05$ point is the Frontier entry in Table~\ref{tab:e3}:
the unweighted mean of CodeContests and Omni-MATH, $15.35$\,pp, reported
as $+15.4\pm0.5$, with $\Delta U_{\mathrm{mark}}=+0.5$\,pp.
WM effect is $0$ at $\rho=0$, $4.2$\,pp at $\rho=0.01$, and $11.5$\,pp
at $\rho=0.025$.
Below $\rho=0.05$, $\Delta U_{\mathrm{mark}}$ stays within $\pm0.1$\,pp.
Increasing to $\rho=0.10$ adds only $1.85$\,pp of WM while incurring
$\Delta U_{\mathrm{mark}}=-0.35$\,pp.

\begin{figure}[htbp]
\centering
\includegraphics[width=0.95\linewidth]{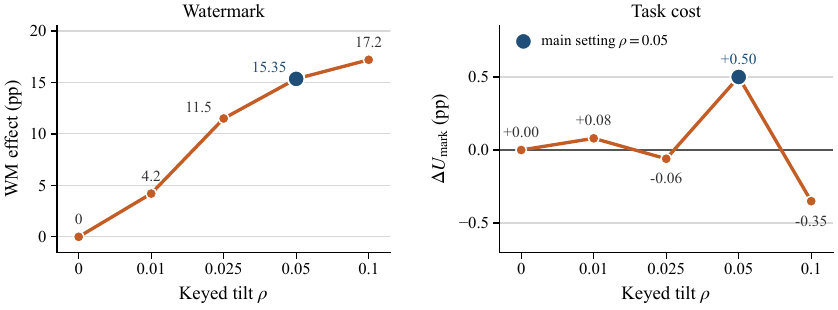}
\caption{Frontier joint watermark across keyed tilt $\rho$.
The $\rho=0.05$ point is the Table~\ref{tab:e3} Frontier summary.}
\label{fig:rho-pareto-schematic}
\end{figure}

\section{Implementation and checkpoint map}
\label{app:repro-details}

This appendix records details needed to reconstruct the reported runs.

\paragraph{Training budget.}
Initial GRPO uses one epoch over the deduplicated training split
(Table~\ref{tab:exp-setup}), with $G=16$ rollouts per prompt and prompt
batches of $50$ (Table~\ref{tab:training-config}).
Each optimizer step therefore scores $800$ responses
($50\times 16$).
Table~\ref{tab:compute-manifest} turns that rule into a rollout budget.
$N_{\mathrm{train}}$ counts each training prompt once, before the exact
audit-prompt deletion in Appendix~\ref{app:training-config}, so the step
counts are upper bounds.
Llama and Qwen2.5 joint runs use TACO-verified ($5{,}000$) plus the
GSM8K training split ($7{,}473$), hence $N_{\mathrm{train}}=12{,}473$.
Frontier runs use TACO-verified ($5{,}000$) plus a random $5{,}000$-prompt
subset of OpenR1-Math-46k-8192, hence $N_{\mathrm{train}}=10{,}000$.
A same-family reference pool is $100$ independent runs of that setting.
The $40$ cross-family controls use the audited setting's split, $20$ runs
each.

\begin{table}[htbp]
\centering
\caption{Compute manifest for one initial GRPO epoch ($G=16$, prompt batch $50$).
Steps are per run, $\lceil N_{\mathrm{train}}/50\rceil$.
Rollouts are the row total.
The two joint rows each describe a single training run.
Each reported joint result averages four seeds, and each seed also trains a matched Clean run of the same size.
Reference-pool and cross-family rows count every run.}
\label{tab:compute-manifest}
\small
\setlength{\tabcolsep}{3pt}
\begin{tabularx}{\linewidth}{@{}Yrrrr@{}}
\toprule
Run class & Runs & $N_{\mathrm{train}}$ & Steps / run & Rollouts \\
\midrule
Llama / Qwen2.5 joint, one run & $1$ & $12{,}473$ & $250$ & $199{,}568$ \\
Frontier joint, one run & $1$ & $10{,}000$ & $200$ & $160{,}000$ \\
Llama-setting reference pool & $100$ & $12{,}473$ & $250$ & $19{,}956{,}800$ \\
Frontier-setting reference pool & $100$ & $10{,}000$ & $200$ & $16{,}000{,}000$ \\
Cross-family controls & $40$ & --- & --- & $7{,}191{,}360$ \\
\bottomrule
\end{tabularx}
\end{table}

RQ5 continuation curves start from the step-$0$ marked checkpoint in each
setting: Llama-math uses the RQ1 single-carrier marked checkpoint; Frontier
uses the RQ2 joint marked checkpoint.
RQ4 audits the four marked checkpoints from joint training on students
from different families in the two settings (Llama and Qwen3-30B-A3B),
not the RQ5 continuation checkpoints.

\paragraph{Seeds and arms.}
RQ1 single-carrier and RQ2 joint runs each use four independent GRPO
training seeds per setting: $1$, $42$, $2026$, and $2025$.
Each seed trains a matched Clean checkpoint.  Base is the initial no-RL
student checkpoint for that setting, evaluated on the same held-out tasks;
it requires no separate training run or seed.
Tables~\ref{tab:e1}--\ref{tab:e3} subtract the paired Clean checkpoint
inside each seed, then report the mean and standard deviation of those
four differences.
The reference pool contains $50$ Clean and $50$ Sham checkpoints with
seeds disjoint from all marked suspects.
Sham arms use the decoy carriers in Appendix~\ref{app:sham-decoys}; Marked
arms use the enrolled carriers and registered code for that setting.

\paragraph{Registered codes.}
Joint Llama/Qwen2.5 training uses an explicitly registered four-sign code
on the carriers in Table~\ref{tab:llama-joint-carriers}; the same code is
used across Qwen3-30B-A3B, GPT-5.6 Sol, and GPT-6 Astra.  Frontier training
uses a separate four-sign code on Table~\ref{tab:frontier-data}, shared
across Qwen3-235B-A22B, GPT-5.6 Sol, and GPT-6 Astra in
Table~\ref{tab:e3}.
Single-carrier RQ1 uses one enrolled carrier per domain with code
$(+1)$ on the positive polarity defined in
Appendix~\ref{app:e1-carriers}.

\end{document}